\newcommand{\Ecal}{\mathcal{E}}
\newcommand{\Mcal}{\mathcal{M}}
\newcommand{\Pcal}{\mathcal{P}}
\newcommand{\Xcal}{\mathcal{X}}
\newcommand{\Ycal}{\mathcal{Y}}
\newcommand{\ol}{\overline}
\newtheorem{theorem}{Theorem}[section]
\newtheorem{lemma}[theorem]{Lemma}
\newtheorem{corollary}[theorem]{Corollary}
\newenvironment{proof}[1][Proof]{\begin{trivlist}
\item[\hskip \labelsep {\bfseries #1}]}{\end{trivlist}}
\title{Expressive Power and Approximation Errors of Restricted Boltzmann Machines}
\author{
Guido F.~Mont\'ufar$^1$, Johannes Rauh$^1$, and Nihat Ay$^{1,2}$ \\
$^1$Max Planck Institute for Mathematics in the Sciences, Inselstra\ss e 22 04103 Leipzig, Germany\\
$^2$Santa Fe Institute, 1399 Hyde Park Road, Santa Fe, New Mexico 87501, USA\\
\texttt{$\{$montufar,jrauh,nay$\}$@mis.mpg.de} 
}
\newcommand{\RBM}{\ensuremath{\operatorname{RBM}}}
\begin{document}

\maketitle

\begin{abstract}
  We present explicit classes of probability distributions that can be learned by Restricted Boltzmann Machines (RBMs) depending on the number of units that they contain, and which are representative for the expressive power of the model. We use this to show that the maximal Kullback-Leibler divergence to the RBM model with $n$ visible and $m$ hidden units is bounded from above by 
  $n - \left\lfloor \log(m+1) \right\rfloor - \frac{m+1}{2^{\left\lfloor \log(m+1) \right\rfloor}}
  \approx (n -1) - \log (m+1)$. In this way we can specify the number of hidden units that guarantees a sufficiently rich model containing different classes of distributions and respecting a given error tolerance. 
\end{abstract}

\section{Introduction}
\label{sec:introduction}
A Restricted Boltzmann Machine (RBM)~\cite{Smolensky1986,Freund1992} is a learning system consisting of two layers of binary stochastic units, a hidden layer and a visible layer, with a complete bipartite interaction
graph. RBMs are used as generative models to simulate input distributions of binary data. They can be trained in an unsupervised way and more efficiently than general Boltzmann Machines, which are not restricted to have a bipartite interaction graph~\cite{Hinton2002, Carreira2005}. Furthermore, they can be used as building blocks to progressively train and study deep learning systems~\cite{Hinton2006,Bengio2007,LeRoux2010,Montufar2011}. Hence, RBMs have received increasing attention in the past years.

An RBM with $n$ visible and $m$ hidden units generates a stationary distribution on the states of the visible units which has the following form: 
\begin{equation*}
 p_{{}_{W,C,B}}(v) = \frac{1}{Z_{{}_{W,C,B}}}\sum_{h\in\{0,1\}^m} \exp\left( h^\top W v + C^\top h + B^\top v \right)\quad \forall v\in\{0,1\}^n \;,
\end{equation*}
where $h\in\{0,1\}^{m}$ denotes a state vector of the hidden units, $W\in\mathbf{R}^{m\times n}, C\in\mathbf{R}^{m}$ and
$B\in\mathbf{R}^{n}$ constitute the model parameters, and $Z_{{}_{W,C,B}}$ is a corresponding normalization constant.
In the sequel we denote by $\RBM_{n,m}$ the set of all probability distributions on $\{0,1\}^n$ which can be
approximated arbitrarily well by a visible distribution generated by the RBM with $m$ hidden and $n$ visible units for
an appropriate choice of the parameter values.

As shown in~\cite{Montufar2011} (generalizing results from~\cite{LeRoux2008}) $\RBM_{n,m}$ contains any probability distribution if $m\geq 2^{n-1}+1$. 
On the other hand, if $\RBM_{n,m}$ equals the set $\Pcal$ of all probability distributions on
$\{0,1\}^n$, then it must have at least $\dim(\Pcal)=2^n-1$ parameters, 
and thus at least $\left\lceil2^n/(n+1)\right\rceil -1$ hidden units~\cite{Montufar2011}. 
In fact, in~\cite{Cueto2010} it was shown that for most combinations of $m$ and $n$ the dimension of $\RBM_{n,m}$ (as a manifold, possibly with singularities) equals either the number of parameters or $2^n-1$, whatever is smaller. 
However, the geometry of $\RBM_{n,m}$ is intricate, and even an RBM of
dimension $2^n-1$ is not guaranteed to contain all visible distributions, see~\cite{Montufar2010} for counterexamples. 

In summary, an RBM that can approximate any distribution arbitrarily well must have a very large number of parameters and hidden
units. In practice, training such a large system is not desirable or even possible. 
However, there are at least two reasons why in many cases this is not necessary:
\begin{itemize}
 \item An appropriate approximation of distributions is sufficient for most purposes.
 \item The interesting distributions the system shall simulate belong to a small class of distributions. Therefore, the model does not need to approximate all distributions.
\end{itemize}

For example, the set of optimal policies in reinforcement learning~\cite{Sutton1998}, the set of dynamics kernels that maximize
predictive information in robotics~\cite{Zahedi2010} or the information flow in neural networks~\cite{Ay2003} are contained in very low
dimensional manifolds; see~\cite{AMR2011}. 
On the other hand, usually it is very hard to mathematically describe a set containing the optimal solutions to general problems, or a set
of interesting probability distributions (for example the class of distributions generating natural images).
Furthermore, although RBMs are parametric models and for any choice of the parameters we have a resulting probability
distribution, in general it is difficult to explicitly specify this resulting probability distribution (or even to
estimate it~\cite{Long2010}). 
Due to these difficulties the number of hidden units $m$ is often chosen on the basis of experience~\cite{Hinton2010}, or $m$ is considered as a hyperparameter which is optimized by extensive search, depending on the distributions to be simulated by the RBM. 

In this paper we give an explicit description of classes of distributions that are contained in $\RBM_{n,m}$, and which are representative for the expressive power of this model. 
Using this description, we estimate the maximal Kullback-Leibler divergence between an arbitrary probability distribution and the best approximation within $\RBM_{n,m}$. 

This paper is organized as follows: Section~\ref{sec:approximation-error} discusses the different kinds of errors that appear when an RBM learns. Section~\ref{sec:model-classes} introduces the statistical models studied in this paper. 
Section~\ref{sec:what-RBMS-can-learn} studies submodels of $\RBM_{n,m}$.  An upper bound of the approximation error for RBMs is found in Section~\ref{sec:maxim-appr-errors}.

\section{Approximation Error}
\label{sec:approximation-error}
When training an RBM to represent a distribution $p$, there are mainly three contributions to the discrepancy between $p$ and the state of the RBM after training: 
\begin{enumerate}
\item Usually the underlying distribution $p$ is unknown and only a set of samples generated by $p$ is observed. 
These samples can be represented as an empirical distribution $p^{\text{Data}}$, which usually is not identical with $p$. 
\item The set $\RBM_{n,m}$ does not contain every probability distribution, unless the number of hidden units is very large, as we outlined in the introduction. 
Therefore, we have an approximation error given by the distance of $p^{\text{Data}}$ to the best approximation $p^{\text{Data}}_{\operatorname{RBM}}$ contained in the RBM model. 
\item The learning process may yield a solution $\tilde p^{\text{Data}}_{\operatorname{RBM}}$ in $\operatorname{RBM}$ which is not the optimum $p^{\text{Data}}_{\operatorname{RBM}}$. This occurs, for example, if the learning algorithm gets trapped in a local optimum, or if it optimizes an objective different from Maximum Likelihood, e.g. contrastive divergence (CD), see~\cite{Carreira2005}. 
\end{enumerate}

In this paper we study the expressive power of the RBM model and the Kullback-Leibler divergence from an arbitrary distribution to its best representation within the RBM model. 
Estimating the approximation error is difficult, because the geometry of the $\operatorname{RBM}$ model is not sufficiently understood. 
Our strategy is to find subsets $\Mcal\subseteq\RBM_{n,m}$ that are easy to describe. 
Then the maximal error when approximating probability distributions with an $\RBM$ is upper bounded by the maximal error when approximating with
$\Mcal$. 

Consider a finite set $\mathcal{X}$. A real valued function on $\mathcal{X}$ can be seen as a real vector with
$|\mathcal{X}|$ entries. The set $\Pcal=\Pcal(\Xcal)$ of all probability distributions on $\mathcal{X}$ is a
$(|\mathcal{X}|-1)$-dimensional simplex in $\mathbf{R}^{|\Xcal|}$.
There are several notions of distance between probability distributions, and in turn for the error in the representation
(approximation) of a probability distribution. One possibility is to use the induced distance of the Euclidean space
$\mathbf{R}^{|\Xcal|}$.
From the point of view of information theory, a more meaningful distance notion for probability distributions is the
Kullback-Leibler divergence:
\begin{equation*}
 D(p\|q):=\sum_x p(x) \log\frac{p(x)}{q(x)}\;.
\end{equation*}
In this paper we use the basis $2$ logarithm. The Kullback-Leibler (KL) divergence is non-negative and vanishes if and only
if $p=q$. If the support of $q$ does not contain the support of $p$ it is defined as $\infty$. The summands with $p(x)=0$ are set to $0$. 
The KL-divergence is not symmetric, but it has nice information theoretic properties~\cite{KL1951,CT2006}. 

If $\Ecal\subseteq\Pcal$ is a statistical model and if $p\in\Pcal$, then any probability distribution $p_{\Ecal}\in\ol\Ecal$ satisfying 
\begin{equation*}
  D(p\|p_{\Ecal}) = D(p\|\Ecal) := \min\{ D(p\|q) : q\in\ol\Ecal \}
\end{equation*}
is called a \emph{(generalized) reversed information projection}, or $rI$-projection. 
Here, $\ol\Ecal$ denotes the closure of $\Ecal$. If $p$ is an empirical distribution, then one can show that any $rI$-projection is a maximum
likelihood estimate.

In order to assess an RBM or some other model $\mathcal{M}$ we use the maximal approximation error with respect to the KL-divergence when approximating arbitrary probability distributions using $\mathcal{M}$:
\begin{equation*}
  D_\mathcal{M}:=\max\left\{ D(p\|\mathcal{M}):p\in\Pcal \right\}\;.
\end{equation*}
For example, the maximal KL-divergence to the uniform distribution $\tfrac{\mathds{1}}{|\mathcal{X}|}$ is attained by the Dirac delta
distributions $\delta_x$, $x\in\mathcal{X}$, and amounts to:
\begin{equation}
  \label{eq:KL-from-unidist}
  D_{\left\{\frac{\mathds{1}}{|\mathcal{X}|}\right\}}= D(\delta_x\|\tfrac{\mathds{1}}{|\mathcal{X}|})=\log|\Xcal|\;.
\end{equation}

\begin{figure}
\setlength{\unitlength}{\textwidth}
\begin{picture}(1,.23)(.07,0)
\put(0.2,0.03){\includegraphics[width=.8\textwidth]{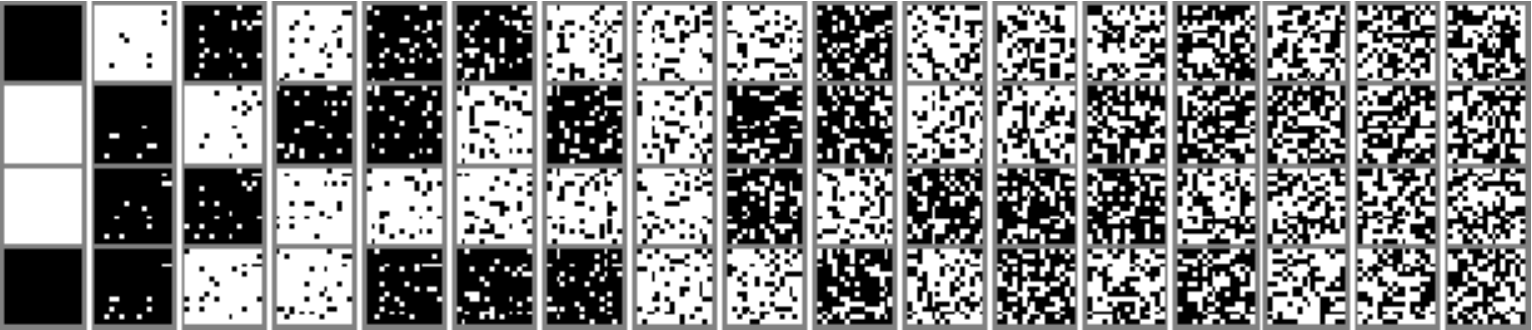}}

\newlength{\ta}
\settowidth{\ta}{relative error}
\put(.14,0){\hspace{-.5\ta}relative error}
\newlength{\tss}
\settowidth{\tss}{$0$}
\newlength{\tsss}
\settowidth{\tsss}{$\tfrac{128}{255}$}
\newlength{\tssss}
\settowidth{\tssss}{$1$}

\newlength{\tsssss}
\settowidth{\tsssss}{$q=p$}
\put(.2235,0.23){\hspace{-.5\tsssss}$q=p$}
\newlength{\tssssss}
\settowidth{\tssssss}{$q=\tfrac{\mathds{1}}{|\mathcal{X}|}$}
\put(.9765,0.23){\hspace{-.5\tssssss}$q=\tfrac{\mathds{1}}{|\mathcal{X}|}$}
\put(.2235,0){\hspace{-.5\tss}$0$} 
\put(.6,0){\hspace{-.5\tsss}$\tfrac{128}{255}$}
\put(.9765,0){\hspace{-.5\tssss}$1$}
\end{picture} 
\vspace{0.1cm}
\caption{%
This figure gives an intuition on what the size of an error means for probability distributions on images with $16\times 16$ pixels. 
Every column shows four samples drawn from the best approximation $q$ of the distribution $p=\tfrac{1}{2}(\delta_{(1\dots1)} + \delta_{(0\dots0)})$ within a partition model with 2 randomly chosen cubical blocks, containing $(0\dots0)$ and $(1\dots1)$, of cardinality from $1$ (first column) to $\tfrac{|\mathcal{X}|}{2}$ (last column). 
As a measure of error ranging from $0$ to $1$ we take ${D(p\|q)}/{D\big(p\|\tfrac{\mathds{1}}{|\mathcal{X}|}\big)}$. 
The last column shows samples from the uniform distribution, which is, in particular, the best approximation of $q$ within $\RBM_{n,0}$. 
Note that an RBM with $1$ hidden unit can approximate $q$ with arbitrary accuracy, see~Theorem~\ref{theorem1}.}\label{fig2}
\end{figure}

\section{Model Classes}
\label{sec:model-classes}

\subsection{Exponential families and product measures}
In this work we only need a restricted class of exponential families, namely exponential families on a finite set with
uniform reference measure.  See~\cite{Brown86:Fundamentals_of_Exponential_Families} for more on exponential families. The boundary of discrete exponential families is discussed in~\cite{RKA10:Support_Sets_and_Or_Mat}, which uses a similar notation. 

Let $A\in\mathbf{R}^{d\times |\mathcal{X}|}$ be a matrix.  The columns $A_{x}$ of $A$ will be indexed by $x\in\Xcal$.
The rows of $A$ can be interpreted as functions on $\mathbf{R}$.
The \emph{exponential family} $\mathcal{E}_{A}$ with \emph{sufficient statistics $A$} consists of all probability
distributions of the form $p_{\lambda}$, $\lambda\in\mathbf{R}^{d}$, where
\begin{equation*}
  p_{\lambda}(x) = \frac{\exp(\lambda^{\top} A_x)}{\sum_x \exp(\lambda^{\top} A_x)},\qquad\text{for all }x\in\Xcal.
\end{equation*}
Note that any probability distribution in $\mathcal{E}_{A}$ has full support. Furthermore, $\mathcal{E}_{A}$ is in general
not a closed set. The closure $\ol{\mathcal{E}_{A}}$ (with respect to the usual topology on $\mathbf{R}^{\Xcal}$) will be important in the following. 
Exponential families behave nicely with respect to $rI$-projection: Any $p\in\Pcal$ has a unique $rI$-projection $p_{\Ecal}$ to $\ol{\Ecal_A}$. 

The most important exponential families in this work are the independence models. The \emph{independence model} of $n$ binary random variables consists of all probability distributions on $\{0,1\}^{n}$ that factorize:
\begin{equation*}
  \ol{\Ecal_{n}} = \Big\{p\in\mathcal{P}(\Xcal): p(x_1,\ldots,x_n)=\prod_{i=1}^n p_i(x_i)\text{ for some }p_{i}\in\Pcal(\{0,1\}) \Big\}\;.
\end{equation*}
It is the closure of an $n$-dimensional exponential family $\Ecal_{n}$. This model corresponds to the RBM model with no hidden units. 
An element of the independence model is called a \emph{product distribution}. 

\begin{lemma}[Corollary 4.1 of~\cite{AyKnauf06:Maximizing_Multiinformation}]
  Let $\ol{\Ecal_{n}}$ be the independence model on $\{0,1\}^{n}$.  If $n>0$, then $D_{\Ecal_{n}}=(n-1)$. 
  The global maximizers are the distribution of the form $\frac12(\delta_{x} + \delta_{y})$, where
  $x,y\in\{0,1\}^{n}$ satisfy $x_{i}+y_{i}=1$ for all $i$.
\end{lemma}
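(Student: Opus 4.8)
The plan is to reduce $D(p\|\Ecal_n)$ to a pure entropy expression and then bound that expression by elementary information-theoretic inequalities. First I would identify the $rI$-projection onto $\ol{\Ecal_n}$ by a direct computation: for \emph{any} product distribution $q=\prod_{i=1}^n q_i$ and any $p\in\Pcal$ with marginals $p_i$, expanding the logarithm of the product gives the decomposition
\[ D(p\|q)=\Big(\sum_{i=1}^{n}H(p_i)-H(p)\Big)+\sum_{i=1}^{n}D(p_i\|q_i), \]
where $H$ is the Shannon entropy in base $2$. Since each summand $D(p_i\|q_i)\ge 0$ and vanishes exactly when $q_i=p_i$, the minimum over $\ol{\Ecal_n}$ is attained at the product of the marginals $\prod_i p_i$ (which indeed lies in $\ol{\Ecal_n}$), consistent with the uniqueness of $rI$-projections for exponential families. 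Hence $D(p\|\Ecal_n)=\sum_{i=1}^n H(p_i)-H(p)$, the multi-information of $p$.

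Second, I would bound this quantity. Fixing an order of the variables and applying the chain rule for entropy, the multi-information telescopes into conditional mutual informations, $\sum_{i=1}^n H(p_i)-H(p)=\sum_{k=2}^{n}I(X_k;X_1,\dots,X_{k-1})$. Each term is a mutual information and is therefore bounded by the entropy of a single binary variable, $I(X_k;X_1,\dots,X_{k-1})\le H(p_k)\le\log 2=1$. Summing the $n-1$ terms yields $D(p\|\Ecal_n)\le n-1$, which gives the claimed value of $D_{\Ecal_n}$ once achievability is shown.

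Third, for achievability and the characterization of maximizers I would track equality. If $p=\tfrac12(\delta_x+\delta_y)$ with $y_i=1-x_i$ for all $i$, then every marginal is uniform, so $\sum_i H(p_i)=n$ while $H(p)=1$, giving $D(p\|\Ecal_n)=n-1$. For the converse, equality in the bound forces every summand $I(X_k;X_1,\dots,X_{k-1})$ to equal $1$, i.e.\ each $X_k$ has a uniform marginal and is a deterministic function of its predecessors. Because the multi-information is symmetric in the variables, this must hold for every ordering; applying it to orderings that place a chosen pair last shows that any two coordinates determine one another through a bijection of $\{0,1\}$ and that $p$ is supported on exactly two equally weighted points. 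Uniformity of the marginals then forces those two points to differ in every coordinate, yielding the complementary pair $x,\bar x$.

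The main obstacle I anticipate is precisely this equality analysis: the chain-rule decomposition is order-dependent, so converting "each summand is maximal" into the order-independent conclusion that the support is a single complementary pair requires invoking the symmetry of the multi-information over all orderings and combining the resulting determinism and uniformity constraints carefully. I would also treat the base case $n=1$ separately, where $\Ecal_1$ is already all of $\Pcal$ and $D_{\Ecal_1}=0=n-1$, noting that the uniqueness of the maximizer is only meaningful for $n\ge 2$.
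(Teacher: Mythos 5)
Your proposal is correct, but note that the paper does not actually prove this lemma: it imports it verbatim as Corollary 4.1 of the cited Ay--Knauf paper, where it is obtained from a general theory of maximizing the divergence from exponential families. Your argument is a self-contained elementary derivation: the decomposition $D(p\|q)=\bigl(\sum_i H(p_i)-H(p)\bigr)+\sum_i D(p_i\|q_i)$ correctly identifies the $rI$-projection as the product of marginals, and the chain-rule telescoping into conditional mutual informations gives the bound $n-1$ together with achievability at complementary point pairs. Two small remarks. First, your worry about order-dependence in the equality analysis is unnecessary: a single ordering already suffices, since equality forces $H(X_k\,|\,X_1,\dots,X_{k-1})=0$ and $H(X_k)=1$ for all $k\ge 2$, so by induction every $X_k$ is a deterministic function of $X_1$ alone, necessarily a bijection of $\{0,1\}$ (else $X_k$ would be constant, contradicting $H(X_k)=1$), which also forces $H(X_1)=1$; the support is then exactly two equiprobable complementary points. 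Second, the characterization of maximizers as stated is only meaningful for $n\ge 2$ (for $n=1$ every distribution attains $D=0$), which you correctly flag. What your route buys is an elementary, fully explicit proof using only standard entropy inequalities; what the cited reference buys is a uniform framework that also covers maximizers of divergence from more general exponential families, which is the perspective the paper leans on elsewhere (e.g.\ the bound $D_{\Ecal}\ge\log(|\Xcal|/(k+1))$ and the role of partition models).
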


This result should be compared with~\eqref{eq:KL-from-unidist}.  Although the independence model is much larger than the set $\{\tfrac{\mathds{1}}{|\mathcal{X}|}\}$, the maximal divergence decreases only by $1$. 
As shown in~\cite{Rauh11:Thesis}, if $\Ecal$ is any exponential family of dimension $k$, then $D_{\Ecal}\ge \log(|\Xcal|/(k+1))$. Thus, this notion of distance is rather strong. 
The exponential families satisfying $D_\Ecal = \log(|\Xcal|/(k+1))$ are partition models; they will be defined in the following section. 

\subsection{Partition models and mixtures of products with disjoint supports}
The \emph{mixture} of $m$ models $\Mcal_{1},\dots,\Mcal_{m}\subseteq\Pcal$ is the set of all convex combinations
\begin{equation}
  \label{eq:mixture_element}
  p=\sum_i \alpha_i p_i\;, \text{ where } p_{i}\in\Mcal_{i}, \alpha_i\geq0, \sum_i\alpha_i =1\;.
\end{equation}
In general, mixture models are complicated objects. 
Even if all models $\Mcal_{1}=\dots=\Mcal_{m}$ are equal, it is
difficult to describe the mixture~\cite{Lindsay1995,Montufar2010a}. 
The situation simplifies considerably if the models have disjoint
supports. 
Note that given any partition $\xi=\{\Xcal_{1},\dots,\Xcal_{m}\}$ of $\Xcal$, any $p\in\Pcal$ can be written as $p(x) = p^{\Xcal_{i}}(x)p(\Xcal_{i})$ for all $x\in\Xcal_{i}$ and $i\in\{1,\dots,m\}$, where $p^{\Xcal_{i}}$ is a probability measure in $\Pcal(\Xcal_{i})$ for all $i$. 
\begin{lemma}
  \label{lem:rI-of-mixture}
  Let $\xi=\{\Xcal_{1},\dots,\Xcal_{m}\}$ be a partition of $\Xcal$ and let $\Mcal_{1},\dots,\Mcal_{m}$ be statistical models such that $\Mcal_{i}\subseteq\Pcal(\Xcal_{i})$. 
Consider any $p\in\Pcal$ and corresponding $p^{\Xcal_{i}}$ such that $p(x) = p^{\Xcal_{i}}(x)p(\Xcal_{i})$ for $x\in\Xcal_i$. 
Let $p_{i}$ be an $rI$-projection of $p^{\Xcal_{i}}$ to $\Mcal_{i}$. 
 Then the $rI$-projection $p_{\Mcal}$ of $p$ to the mixture $\Mcal$ of $\Mcal_{1},\dots,\Mcal_{m}$ satisfies
  \begin{equation*}
    p_{\Mcal}(x) = p(\Xcal_{i})p_{i}(x),\qquad\text{ whenever }x\in\Xcal_{i}\;.
  \end{equation*}
  Therefore, $D(p\|\Mcal) = \sum_i p(\Xcal_i) D(p^{\Xcal_i}\|\Mcal_i)$, and so $D_{\Mcal} = \max_{i=1,\dots,m} D_{\Mcal_{i}}$. 
\end{lemma}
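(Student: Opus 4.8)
The plan is to exploit the disjoint-support structure, which makes the KL-divergence decompose additively across the blocks of the partition $\xi$. The central observation is that the mixture $\Mcal$ consists precisely of those distributions $q$ that can be written as $q(x)=\beta_i q_i(x)$ for $x\in\Xcal_i$, where $q_i\in\Mcal_i$ and $(\beta_1,\dots,\beta_m)$ is a probability vector; because the supports $\Xcal_i$ are disjoint, the mixture weights are recoverable as $\beta_i=q(\Xcal_i)$ and the conditional distributions $q_i=q^{\Xcal_i}$ are uniquely determined. Thus I would first parametrize an arbitrary $q\in\ol\Mcal$ by its block-masses $\beta_i$ and its within-block conditionals $q_i\in\ol{\Mcal_i}$, with no coupling between distinct blocks.

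\textbf{Step 1: Decompose the divergence.} Starting from the definition and splitting the sum over $\Xcal$ into the blocks $\Xcal_i$, I would substitute $p(x)=p(\Xcal_i)p^{\Xcal_i}(x)$ and $q(x)=\beta_i q_i(x)$ for $x\in\Xcal_i$. The logarithm then splits as $\log\frac{p(x)}{q(x)}=\log\frac{p(\Xcal_i)}{\beta_i}+\log\frac{p^{\Xcal_i}(x)}{q_i(x)}$. Summing over $x\in\Xcal_i$ and using $\sum_{x\in\Xcal_i}p^{\Xcal_i}(x)=1$ yields
\begin{equation*}
  D(p\|q)=\sum_i p(\Xcal_i)\log\frac{p(\Xcal_i)}{\beta_i}+\sum_i p(\Xcal_i)\,D(p^{\Xcal_i}\|q_i)
  =D(\beta^{p}\|\beta)+\sum_i p(\Xcal_i)\,D(p^{\Xcal_i}\|q_i),
\end{equation*}
where $\beta^{p}=(p(\Xcal_1),\dots,p(\Xcal_m))$ is the block-mass vector of $p$. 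This exhibits the objective as a sum of terms that depend on disjoint groups of the free variables.

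\textbf{Step 2: Minimize term by term.} Since the mixture weights $\beta$ and each conditional $q_i$ range freely and independently, the minimization over $\ol\Mcal$ factorizes. The first summand $D(\beta^{p}\|\beta)$ is minimized (to $0$) by choosing $\beta=\beta^{p}$, i.e. $\beta_i=p(\Xcal_i)$. Each remaining summand $p(\Xcal_i)\,D(p^{\Xcal_i}\|q_i)$ is minimized over $q_i\in\ol{\Mcal_i}$ by taking $q_i$ to be an $rI$-projection $p_i$ of $p^{\Xcal_i}$, since the nonnegative weight $p(\Xcal_i)$ does not affect the minimizer. Substituting these optimal choices back gives $p_\Mcal(x)=p(\Xcal_i)p_i(x)$ for $x\in\Xcal_i$, the claimed minimizer, together with $D(p\|\Mcal)=\sum_i p(\Xcal_i)\,D(p^{\Xcal_i}\|\Mcal_i)$. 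The final equality $D_\Mcal=\max_i D_{\Mcal_i}$ then follows: the right-hand side is an average of the quantities $D(p^{\Xcal_i}\|\Mcal_i)\le D_{\Mcal_i}$ with weights summing to one, so $D(p\|\Mcal)\le\max_i D_{\Mcal_i}$; conversely this bound is attained by placing all mass $p(\Xcal_i)=1$ on the block $i$ achieving the maximum and letting $p^{\Xcal_i}$ be a worst-case distribution for $\Mcal_i$.

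The main technical point to handle carefully is the passage to the closure $\ol\Mcal$, which is where the minimum is actually taken and where the support condition for finiteness of the KL-divergence matters. One must check that the block-wise parametrization extends to $\ol\Mcal$, so that minimizing over $\ol\Mcal$ genuinely reduces to choosing $\beta\in\ol{\Delta_{m-1}}$ and $q_i\in\ol{\Mcal_i}$ independently; degenerate weight vectors $\beta$ with some $\beta_i=0$ are admissible but are never optimal here unless $p(\Xcal_i)=0$, in which case the corresponding term vanishes by the convention that summands with $p(x)=0$ are set to $0$. Granting this, the decomposition in Step 1 is an identity and the term-by-term minimization in Step 2 is immediate, so no heavy computation remains.
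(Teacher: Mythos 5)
Your proof is correct and follows essentially the same route as the paper: decompose $D(p\|q)$ over the blocks of $\xi$ and minimize each piece independently. You are in fact slightly more careful than the paper's one-line argument, which writes the decomposition as $\sum_i q(\Xcal_i)D(q^{\Xcal_i}\|p_i)$ and silently drops the mixture-weight term $\sum_i p(\Xcal_i)\log\bigl(p(\Xcal_i)/\beta_i\bigr)$ that you correctly identify and minimize to zero by setting $\beta_i=p(\Xcal_i)$.
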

\begin{proof}
  Let $p\in\Mcal$ be as in~\eqref{eq:mixture_element}. 
  Then $D(q\|p) = \sum_{i=1}^{m}q(\Xcal_{i}) D(q^{\Xcal_{i}}\|p_{i})$
  for all $q\in\Pcal$. For fixed $q$ this sum is minimal if and only if each term is minimal. 
\hfill$\square$\end{proof}
If each $\Mcal_{i}$ is an exponential family, then the mixture is also an exponential family (this is not true if the supports of the models $\Mcal_{i}$ are not disjoint). 
In the rest of this section we discuss two examples. 

If each $\Mcal_{i}$ equals the set containing just the uniform distribution on $\Xcal_{i}$, then $\Mcal$ is called the
\emph{partition model} of $\xi$, denoted with $\Pcal_{\xi}$. 
The partition model $\Pcal_\xi$ is given by all distributions with constant value on each block $\mathcal{X}_i$, i.e. those that satisfy
 $p(x)=p(y)$ for all $x,y\in\mathcal{X}_i$. 
This is the closure of the exponential family with sufficient statistics
\begin{equation*}
 A_x=\left(
    \chi_{1}(x),
    \chi_{2}(x),
\dots,
    \chi_{d}(x)\right)^{\top}\;,
\end{equation*}
where $\chi_i:=\chi_{{}_{\mathcal{X}_i}}$ is $1$ on $x\in\mathcal{X}_i$, and $0$ everywhere else. See~\cite{Rauh11:Thesis} for interesting properties of partition models. 

The partition models include the set of finite exchangeable distributions (see e.g.~\cite{Diaconis1980}), where the blocks of the partition are the sets of binary vectors which have the same number of entries equal to one. The probability of a vector $v$ depends only on the number of ones, but not on their position. 

\begin{figure}[!ht]
\begin{center}
\vspace{-.5cm}
\includegraphics[trim=6cm 21.6cm 5.5cm 4.1cm, clip=true, width=11cm]{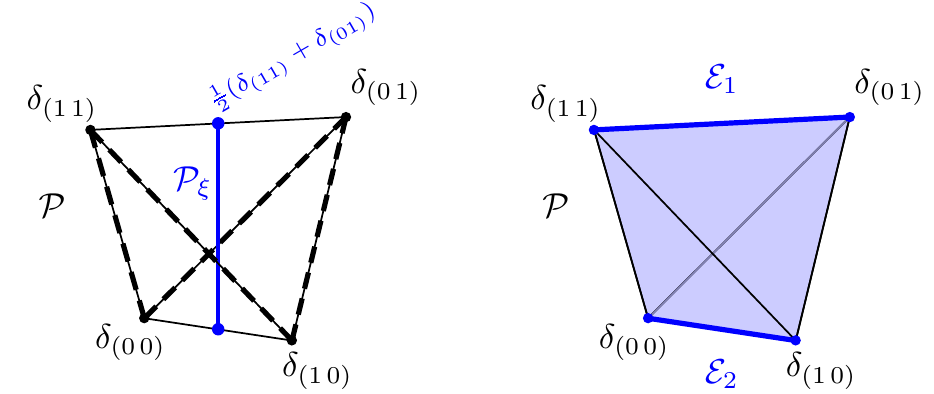}  
\end{center}
\vspace{-.1cm}
\caption{Models in $\Pcal(\{0,1\}^2)$. 
Left: The blue line represents the partition model $\mathcal{P}_\xi$ with partition $\xi=\{(1 1), (0 1)\}\cup\{(0 0),(1 0)\}$. 
The dashed lines represent the set of KL-divergence maximizers for $\mathcal{P}_\xi$. 
Right: The mixture of the product distributions $\mathcal{E}_1$ and $\mathcal{E}_2$ with disjoint supports on $\{(11),(01)\}$ and $\{(00),(10)\}$ corresponding to the same partition $\xi$ equals the whole simplex $\Pcal$.}\label{fig1}
\end{figure}

\begin{corollary}
  \label{cor:part-mod-max-KL}
  Let $\xi=\{\Xcal_{1},\dots,\Xcal_{m}\}$ be a partition of $\Xcal$.  Then
  $D_{\Pcal_{\xi}} = \max_{i=1,\dots,m}\log|\Xcal_{i}|$.
\end{corollary}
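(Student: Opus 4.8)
The plan is to recognize $\Pcal_{\xi}$ as a mixture with disjoint supports and then apply Lemma~\ref{lem:rI-of-mixture} together with the already-computed maximal divergence to a single uniform distribution from~\eqref{eq:KL-from-unidist}.

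First I would recall that, by its definition, the partition model $\Pcal_{\xi}$ is exactly the mixture $\Mcal$ of the models $\Mcal_{1},\dots,\Mcal_{m}$ in which each $\Mcal_{i}=\{\tfrac{\mathds{1}}{|\Xcal_{i}|}\}$ is the singleton consisting of the uniform distribution on the block $\Xcal_{i}$. Since $\xi$ is a partition, the supports $\Xcal_{1},\dots,\Xcal_{m}$ are pairwise disjoint and $\Mcal_{i}\subseteq\Pcal(\Xcal_{i})$, so the hypotheses of Lemma~\ref{lem:rI-of-mixture} are satisfied. That lemma then gives at once $D_{\Pcal_{\xi}}=D_{\Mcal}=\max_{i=1,\dots,m}D_{\Mcal_{i}}$, reducing the global computation to a separate computation on each block.

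It remains to evaluate $D_{\Mcal_{i}}$ for each $i$. Because $\Mcal_{i}$ is the single uniform distribution $\tfrac{\mathds{1}}{|\Xcal_{i}|}$, this is precisely the quantity computed in~\eqref{eq:KL-from-unidist}, but now with the block $\Xcal_{i}$ playing the role of the ambient sample space: the maximum of $D(\,\cdot\,\|\tfrac{\mathds{1}}{|\Xcal_{i}|})$ over $\Pcal(\Xcal_{i})$ is attained at the Dirac deltas $\delta_{x}$ with $x\in\Xcal_{i}$ and equals $\log|\Xcal_{i}|$. Substituting $D_{\Mcal_{i}}=\log|\Xcal_{i}|$ into the maximum yields $D_{\Pcal_{\xi}}=\max_{i=1,\dots,m}\log|\Xcal_{i}|$, as claimed.

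The only point requiring care is bookkeeping: the divergence formula~\eqref{eq:KL-from-unidist} must be applied to each block $\Xcal_{i}$ as its own sample space, so that the relevant uniform distribution and its Dirac maximizers are understood as elements of $\Pcal(\Xcal_{i})$ rather than of $\Pcal(\Xcal)$. Once the mixture decomposition supplied by Lemma~\ref{lem:rI-of-mixture} is in place there is no genuine obstacle, and the statement follows in essentially one line.
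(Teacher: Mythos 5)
Your proposal is correct and follows exactly the route the paper intends: the corollary is stated immediately after Lemma~\ref{lem:rI-of-mixture} precisely because the partition model is the disjoint-support mixture of the singletons $\{\tfrac{\mathds{1}}{|\Xcal_i|}\}$, so $D_{\Pcal_\xi}=\max_i D_{\Mcal_i}$, and each $D_{\Mcal_i}=\log|\Xcal_i|$ by~\eqref{eq:KL-from-unidist} applied with $\Xcal_i$ as the sample space. No issues.
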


Now assume that $\mathcal{X}=\{0,1\}^{n}$ is the set of binary vectors of length $n$.
As a subset of $\mathbf{R}^n$ it consists of the vertices (extreme points) of the $n$-dimensional hypercube. 
The vertices of a $k$-dimensional face of the $n$-cube are given by fixing the values of $x$ in $n-k$ positions: 
\begin{equation*}
 \{x\in\{0,1\}^n:x_{i}=\tilde x_i, \forall i\in I, \mbox{ for some } I\subseteq\{1,\ldots,n\}, |I|=n-k\}
\end{equation*}
We call such a subset $\Ycal\subseteq\Xcal$ \emph{cubical} or a \emph{face} of the $n$-cube. A cubical subset of cardinality $2^{k}$ can be naturally identified with $\{0,1\}^{k}$. This identification allows to define independence models and product measures on $\Pcal(\Ycal)\subseteq\Pcal(\Xcal)$. 
Note that product measures on $\Ycal$ are also product measures on $\Xcal$, and the independence model on $\Ycal$ is a subset of the independence model on $\Xcal$.

\begin{corollary}
  \label{cor:ind-mix-max-KL}
  Let $\xi=\{\Xcal_{1},\dots,\Xcal_{m}\}$ be a partition of $\Xcal=\{0,1\}^{n}$ into cubical sets.  For any $i$ let
  $\Ecal_{i}$ be the independence model on $\Xcal_{i}$, and let $\Mcal$ be the mixture of $\Ecal_{1},\dots,\Ecal_{m}$.
  Then
  \begin{equation*}
    D_{\Mcal} = \max_{i=1,\dots,m}\log(|\Xcal_{i}|)-1\;.
  \end{equation*}
\end{corollary}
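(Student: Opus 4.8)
The plan is to reduce the computation of $D_{\Mcal}$ to the already-established maximal divergences of the individual independence models, exploiting the disjointness of the supports. Since $\xi$ is a partition of $\Xcal$ and each $\Ecal_{i}\subseteq\Pcal(\Xcal_{i})$, the mixture $\Mcal$ falls exactly into the setting of Lemma~\ref{lem:rI-of-mixture}. Hence the first step is simply to invoke that lemma to conclude
\begin{equation*}
  D_{\Mcal} = \max_{i=1,\dots,m} D_{\Ecal_{i}},
\end{equation*}
so that everything comes down to evaluating $D_{\Ecal_{i}}$ for each block separately.

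Next I would use the cubical structure. Each $\Xcal_{i}$ is a face of the $n$-cube, so $|\Xcal_{i}|=2^{k_{i}}$ for some $0\le k_{i}\le n$, and under the natural identification $\Xcal_{i}\cong\{0,1\}^{k_{i}}$ described above, the model $\Ecal_{i}$ is precisely the independence model on $\{0,1\}^{k_{i}}$. For $k_{i}\ge 1$ the Lemma of Ay and Knauf (Corollary 4.1 of~\cite{AyKnauf06:Maximizing_Multiinformation}, quoted at the start of this section) then gives $D_{\Ecal_{i}}=k_{i}-1=\log(|\Xcal_{i}|)-1$, using the base-$2$ logarithm.

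The final step is to take the maximum. The only point requiring care is a block consisting of a single vertex ($k_{i}=0$): there $\Ecal_{i}$ contains only the unique Dirac measure on $\Xcal_{i}$, so $D_{\Ecal_{i}}=0$, whereas the expression $\log(|\Xcal_{i}|)-1$ would read $-1$. This discrepancy is harmless for the maximum as long as some block has cardinality at least $2$, since then the largest block contributes $\max_{i}\log(|\Xcal_{i}|)-1\ge 0\ge D_{\Ecal_{j}}$ for every singleton block $j$. Combining this with the first step yields $D_{\Mcal}=\max_{i}\log(|\Xcal_{i}|)-1$, which is the claim.

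I do not expect a genuine obstacle here: once Lemma~\ref{lem:rI-of-mixture} and the Ay--Knauf result are in hand, the argument is a direct substitution. The one thing to get right is the bookkeeping with the base-$2$ logarithm and the singleton blocks, i.e. making sure the $-1$ sits outside the maximum and that the maximizing block is never a singleton in the nondegenerate case.
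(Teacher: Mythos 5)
Your proof is correct and follows exactly the route the paper intends: Lemma~\ref{lem:rI-of-mixture} reduces $D_{\Mcal}$ to $\max_i D_{\Ecal_i}$, and the quoted Ay--Knauf result evaluates each $D_{\Ecal_i}$ as $\log|\Xcal_i|-1$ via the identification of a cubical block with $\{0,1\}^{k_i}$. Your remark about singleton blocks is a genuine (if harmless) refinement that the paper glosses over entirely.
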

See Figure~\ref{fig2} for an intuition on the approximation error of partition models, and see Figure~\ref{fig1} for small examples of a partition model and of a mixture of products with disjoint support. 

\section{Classes of distributions that RBMs can learn}
\label{sec:what-RBMS-can-learn}

Consider a set $\xi=\{\mathcal{X}_i\}_{i=1}^{m}$ of $m$ disjoint cubical sets $\mathcal{X}_i$ in $\mathcal{X}$. Such a $\xi$ is a partition of some subset $\cup\xi=\cup_{i}\Xcal_{i}$ of $\mathcal{X}$ into $m$ disjoint cubical sets. 
We write $G_{m}$ for the collection of all such partitions. 
We have the following result:

\begin{theorem}\label{theorem1}
$\RBM_{n,m}$ contains the following distributions:
\begin{itemize}
\item Any mixture of one arbitrary product distribution, $m-k$ product distributions with support on arbitrary but disjoint faces of the $n$-cube, and $k$ arbitrary distributions with support on any edges of the $n$-cube, for any $0\leq k\leq m$. 
In particular:
\item Any mixture of $m+1$ product distributions with disjoint cubical supports. 
In consequence, $\RBM_{n,m}$ contains the partition model of any partition in $G_{m+1}$. 
\end{itemize}
\end{theorem}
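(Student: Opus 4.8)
The plan is to use the factorized form of the RBM distribution obtained by summing out each hidden unit separately. Writing $W_j$ for the $j$-th row of $W$ and using $\sum_{h_j\in\{0,1\}}\exp\bigl(h_j(W_jv+C_j)\bigr)=1+\exp(W_jv+C_j)$, every visible distribution of the RBM can be written as
\begin{equation*}
 p_{W,C,B}(v)\;\propto\;\exp(B^\top v)\prod_{j=1}^{m}\bigl(1+\exp(W_jv+C_j)\bigr)\;.
\end{equation*}
Here the base factor $\exp(B^\top v)$ is an arbitrary unnormalized product distribution and will supply the single full-support product component, while each of the $m$ hidden units will be tuned to contribute one of the remaining $m$ mixture components.

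Fix disjoint faces $\Ycal_1,\dots,\Ycal_m$, where $\Ycal_j$ is cut out by fixing the coordinates $i\in I_j$ to prescribed values $\tilde x^{(j)}_i$. For hidden unit $j$ I would set $W_j=t\,u_j+b_j$ and $C_j=-tM_j+c_j$, where $u_{j,i}=2\tilde x^{(j)}_i-1$ on $I_j$ and $0$ otherwise, $M_j=|\{i\in I_j:\tilde x^{(j)}_i=1\}|$, the vector $b_j$ is finite and supported on the free coordinates $i\notin I_j$, and $t>0$ is a scale to be sent to infinity. A direct computation gives $W_jv+C_j=-t\,d_j(v)+g_j(v)$, where $d_j(v)$ is the Hamming distance from $v|_{I_j}$ to $\tilde x^{(j)}$ and $g_j(v)=\sum_{i\notin I_j}b_{j,i}v_i+c_j$ is an affine (hence product-shaped) function on $\Ycal_j$. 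Thus as $t\to\infty$ the $j$-th factor tends to $1+\exp(g_j(v))$ for $v\in\Ycal_j$ and to $1$ for $v\notin\Ycal_j$. Because the faces are \emph{disjoint}, at most one factor differs from $1$ at any given $v$, so the product telescopes and the limiting unnormalized measure is
\begin{equation*}
 \exp(B^\top v)\;+\;\sum_{j=1}^{m}\mathds{1}[v\in\Ycal_j]\,\exp\bigl(B^\top v+g_j(v)\bigr)\;.
\end{equation*}
After normalization this is exactly a mixture $\alpha_0 q_0+\sum_j\alpha_j q_j$, where $q_0\propto\exp(B^\top v)$ is the base product and each $q_j$ is a product distribution on $\Ycal_j$. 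Since $\RBM_{n,m}$ is defined by arbitrary approximability, this limit lies in $\RBM_{n,m}$.

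It then remains to check that the data of the mixture can be prescribed freely. The shape of $q_j$ on its free coordinates is governed by $B_i+b_{j,i}$ for $i\notin I_j$, which is arbitrary since $b_j$ is free, so each $q_j$ is an arbitrary product distribution on $\Ycal_j$; the base $q_0$ is an arbitrary product distribution because $B$ is free. The mixing weights are controlled by the constants $c_j$: the mass contributed by component $j$ scales like $\exp(c_j)$ independently of its shape and of the other components (again by disjointness), so every ratio $\alpha_j/\alpha_0$, hence every interior weight vector, is attainable, the boundary cases following by closedness of $\RBM_{n,m}$. Choosing the last $k$ of the faces to be edges gives the first bulleted statement, using that on a two-element cubical set the independence model is \emph{all} of $\Pcal(\Ycal_j)$, so a product component there is an arbitrary distribution. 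Finally, for the ``in particular'' claim I would fix a partition $\{\Xcal_1,\dots,\Xcal_{m+1}\}\in G_{m+1}$, realize the product components on $\Xcal_1,\dots,\Xcal_m$ as above, and degenerate the base product $q_0$ onto $\Xcal_{m+1}$ by sending the appropriate entries of $B$ to $\pm\infty$; invoking closedness of $\RBM_{n,m}$ yields a mixture of $m+1$ products on disjoint cubical supports, of which the partition model of $\xi$ is the special case in which every component is the uniform distribution on its block.

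I expect the main obstacle to be the rigorous justification of the $t\to\infty$ limit together with the free choice of mixture weights: one must confirm that the off-face contributions vanish after normalization simultaneously at every point, and that the scale parameters $c_j$ and the shape parameters $b_j$ genuinely decouple, so that an arbitrary product on each face \emph{and} an arbitrary weight vector can be realized at once. Disjointness of the faces is precisely what makes this decoupling work, and the clean telescoping of the product fails without it.
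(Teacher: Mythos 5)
Your construction is essentially the paper's mechanism: summing out a hidden unit contributes a factor $1+\exp(W_jv+C_j)$ to the unnormalized visible distribution, and by sending the weights on the fixed coordinates of a face to infinity this factor becomes $1$ off the face and an arbitrary positive product-form function on it. The paper organizes this as an induction, appending one hidden unit at a time to an RBM whose current distribution restricts to a product on the next face; you instead take all $m$ limits simultaneously and use disjointness to expand the product of factors into a sum with no cross terms. Your computation of $W_jv+C_j=-t\,d_j(v)+g_j(v)$ is correct, the decoupling of shapes (via $b_j$) from weights (via $c_j$) is sound, and the closure arguments for boundary weights and for degenerating the base product onto $\Xcal_{m+1}$ are legitimate since $\RBM_{n,m}$ is defined as a closure. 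This fully establishes the second bullet and the partition-model consequence, which is all that Theorem~\ref{theorem2} and Corollary~\ref{cor:part-mod-max-KL} later require.

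However, there is a gap against the first bullet as stated: the theorem allows the $k$ arbitrary distributions to sit on \emph{any} edges, not necessarily disjoint from each other or from the $m-k$ faces, and this extra generality is what feeds Corollary~\ref{corollary1} (a support \emph{covered} by $m+1$ pairs). Your telescoping step fails precisely there: if $v$ lies in two of the chosen cubical sets, the limiting factor at $v$ is $\bigl(1+\exp(g_j(v))\bigr)\bigl(1+\exp(g_{j'}(v))\bigr)$, whose cross term $\exp\bigl(g_j(v)+g_{j'}(v)\bigr)$ prevents the result from being the prescribed mixture. The paper's sequential argument absorbs overlapping edges for free, because the restriction of \emph{any} distribution to a two-point cubical set is automatically a product distribution, so the induction hypothesis holds at each edge step regardless of what was mixed in before. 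To repair your version you would either have to run the edge steps sequentially after the simultaneous face construction, or argue separately (e.g., by passing to a disjoint refinement of the covering edges into edges and singletons) that the overlapping case reduces to the disjoint one.
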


Restricting the cubical sets of the second item to edges, i.e. pairs of vectors differing in one entry, we see that the above theorem implies the following previously known result, which was shown in~\cite{Montufar2011}:

\vspace{-.1cm}
\begin{corollary}\label{corollary1}
$\RBM_{n,m}$ contains the following distributions:
 \begin{itemize}
 \item 
Any distribution with a support set that can be covered by $m+1$ pairs of vectors differing in one entry. In particular, this includes:
 \item
Any distribution in $\mathcal{P}$ with a support of cardinality smaller than or equal to $m+1$. 
\end{itemize}
\end{corollary}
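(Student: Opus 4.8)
The plan is to exploit the factorization of the RBM distribution over the hidden units. Summing out the hidden states gives
\[
p_{W,C,B}(v) \propto \exp(B^\top v)\prod_{j=1}^m\bigl(1+\exp(W_j v + C_j)\bigr),
\]
where $W_j$ is the $j$-th row of $W$. This exhibits an RBM as a base product measure $\exp(B^\top v)$ multiplied by $m$ independent ``gates,'' one per hidden unit, and the whole strategy is to read off a mixture of $m+1$ products from this product of gates by driving the gate weights into a sharp limiting regime. First I would use the visible biases alone to realize the single arbitrary product distribution: choosing $B$ appropriately makes $\exp(B^\top v)\propto\pi_0$ for any full-support product distribution $\pi_0$, and degenerate choices of $B$ (in the closure) collapse $\pi_0$ onto any face.

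Next, to each of the $m$ disjoint cubical sets $Y_1,\dots,Y_m$ I would dedicate one hidden unit. Writing $Y_j$ as the face obtained by fixing the coordinates in an index set $I_j$, I would put large weights $\theta\,s_i$ on the fixed coordinates $i\in I_j$, with signs $s_i$ chosen so that $W_j v$ is maximal exactly on $Y_j$, together with free finite weights on the remaining (free) coordinates, and absorb the resulting maximum into $C_j$. Letting $\theta\to\infty$, the gate $1+\exp(W_j v+C_j)$ tends to $1$ off $Y_j$ and to $1+h_j(v)$ on $Y_j$, where $h_j$ is an exponential in the free coordinates. Multiplying by $\exp(B^\top v)$ restricted to $Y_j$ combines these free weights with $B$ on the free coordinates, so the added term becomes an \emph{arbitrary} product distribution on $Y_j$ whose total mass is tuned by $C_j$.

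The key step where disjointness enters is the evaluation of $\prod_j\bigl(1+\exp(W_j v + C_j)\bigr)$ in the limit. Because the faces $Y_j$ are pairwise disjoint, every $v$ lies in at most one of them, so in the limit at most one gate differs from $1$ at any given point; the product collapses with no surviving cross terms, and the unnormalized limit is exactly $\exp(B^\top\cdot)+\sum_j \mathbb{1}_{Y_j}\,\rho_j$ with $\rho_j$ an arbitrary product measure on $Y_j$. Normalizing and reading off the coefficients expresses this as the desired mixture $\alpha_0\pi_0+\sum_j\alpha_j\rho_j$, and since the $C_j$ control the ratios $\alpha_j/\alpha_0$ independently, every choice of mixture weights is attainable. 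Because $\RBM_{n,m}$ is by definition closed under arbitrary approximation, the $\theta\to\infty$ limit lies in $\RBM_{n,m}$. The case $|I_j|=n-1$ gives a one-dimensional face on which every distribution is already a product distribution, which yields the $k$ ``arbitrary distributions on edges'' at no extra cost; and collapsing $\pi_0$ onto a cubical set disjoint from the $Y_j$ gives the mixture of $m+1$ products with disjoint cubical supports, hence the partition models in $G_{m+1}$, whose blockwise-uniform distributions are products.

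I expect the main obstacle to be the rigorous justification of the $\theta\to\infty$ limit: one must control the off-face gate values uniformly so that they vanish simultaneously for all $m$ units, confirm that the cross terms in the product are genuinely negligible (this is precisely where pairwise disjointness is indispensable), and check that the normalization constant converges, so that the limiting normalized distribution is \emph{exactly} the claimed mixture rather than merely close to it. Verifying that the induced map from $(B,C_1,\dots,C_m,\text{free weights})$ onto the mixture parameters covers the full simplex of weights, with the boundary handled by closure, is the remaining bookkeeping.
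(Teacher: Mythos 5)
Your construction is genuinely different from the paper's: the paper obtains this corollary as an immediate specialization of Theorem~\ref{theorem1} to edges, and Theorem~\ref{theorem1} itself is proved \emph{sequentially}, appending one hidden unit at a time and taking a separate limit $a\to\infty$ for each unit, so that each new unit mixes one product distribution on a face into the distribution already built. You instead sum out all hidden units at once and drive all gates into their sharp regime simultaneously. For pairwise disjoint faces and a full-support base measure $\exp(B^{\top}v)$ your limit analysis is sound, and the rigor you worry about is not the issue: on a finite state space, pointwise convergence of the unnormalized weights to strictly positive limits immediately gives convergence of the normalized distributions.

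The gaps lie elsewhere. First, the covering pairs in the first bullet are \emph{not} required to be disjoint (the support $\{000,001,010,100\}$, for instance, is naturally covered by three mutually intersecting edges through $000$), and your cross-term cancellation genuinely needs pairwise disjointness: if $v\in Y_i\cap Y_j$ the product retains the factor $(1+h_i(v))(1+h_j(v))$ and the limit is no longer the claimed mixture. To repair this you would either have to show that any set covered by $m+1$ edges is covered by at most $m+1$ \emph{pairwise disjoint} edges --- a nontrivial matching claim in the hypercube that you do not address --- or fall back on a sequential argument, which handles overlapping edges for free because the conditional of \emph{any} intermediate distribution on an edge is automatically a product distribution; that observation is exactly how the paper's proof of Theorem~\ref{theorem1} concludes. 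Second, to obtain $m+1$ (rather than $m$) arbitrary edge components you must degenerate the base, sending entries of $B$ to $\pm\infty$; but $B$ also enters the free-coordinate parameters and the total masses of every $\rho_j$, so the $\lambda^{(j)}$ and $c_j$ must diverge at matched rates and the limit becomes a coupled one. You flag this as bookkeeping, but it is precisely the step the one-unit-at-a-time scheme avoids, and as written your argument only delivers mixtures of one full-support product distribution with $m$ further components. For the second bullet the disjointness problem disappears (group the support points into the disjoint edges of a single coordinate direction), but the coupled degenerate limit remains to be carried out.
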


Corollary~\ref{corollary1} implies that an RBM with $m\geq 2^{n-1}-1$ hidden units is a universal approximator of
distributions on $\{0,1\}^n$, i.e. can approximate any distribution to an arbitrarily good accuracy. 

Assume $m+1=2^{k}$ and let $\xi$ be a partition of $\Xcal$ into $m+1$ disjoint cubical sets of equal size. 
Let us denote by $\mathcal{P}_{\xi,1}$ the set of all distributions which can be written as a mixture of $m+1$ product distributions with
support on the elements of $\xi$. 
The dimension of $\mathcal{P}_{\xi,1}$ is given by
\begin{equation*}
  \dim \mathcal{P}_{\xi,1} = (m+1) \log\left(\frac{2^n}{m+1}\right) + m+1 + n= (m+1)\cdot n + (m+1) + n - (m+1)\log(m+1)\;.
\end{equation*}
The dimension of the set of visible distribution represented by an RBM is at most equal to the number of parameters, see~\cite{Montufar2011}, this is $m\cdot n + m + n$. 
This means that the class given above has roughly the same dimension as the set of distributions that can be represented. 
In fact,
\begin{equation*}
  \dim \mathcal{P}_{\xi,1}-\dim \operatorname{RBM}_{m-1} = 
  n+1-(m+1)\log(m+1)\;.
\end{equation*}
This means that the class of distributions $\mathcal{P}_{\xi,1}$ which by Theorem~\ref{theorem1} can be represented by
$\RBM_{n,m}$ is not contained in $\RBM_{n,m-1}$ when $(m+1)^{m+1}\leq 2^{n+1}$. 

\begin{proof}[Proof of Theorem~\ref{theorem1}]
The proof draws on ideas from~\cite{LeRoux2008} and~\cite{Montufar2011}. 
An RBM with no hidden units can represent precisely the independence model, i.e. all product distributions, and in particular any uniform distribution on a face of the
$n$-cube. 

Consider an RBM with $m-1$ hidden units. For any choice of the parameters $W\in\mathbf{R}^{m-1\times n}, B\in\mathbf{R}^{n}, C\in\mathbf{R}^{m-1}$ we can write the resulting distribution on the visible units as: 
\begin{equation}
 p(v)=\frac{\sum_h z(v,h) }{\sum_{v',h'} z(v',h')}\;, \label{equationuno}
\end{equation}
where $z(v,h)=\exp(h W v + B v + C h)$. 
Appending one additional hidden unit, 
with connection weights $w$ to the visible units and bias $c$, produces a new distribution which can be written as
follows: 
\begin{equation*}
 p_{w,c}(v)=\frac{(1+\exp(w v + c))\sum_h z(v,h) }{\sum_{v',h'} (1+\exp(w v' + c )) z(v',h')}\;.\label{equationunoo}
\end{equation*}
Consider now any set $I\subseteq[n]:=\{1,\ldots,n\}$ and an arbitrary visible vector $u\in\mathcal{X}$. The values of $u$ in the positions $[n]\setminus I$ define a face $F:=\{v\in\mathcal{X}: v_i=u_i\;,\forall i \not\in I\}$ of the $n$-cube of dimension~$|I|$. 
 Let $\mathds{1}:=(1,\ldots,1)\in\mathbf{R}^n$ and denote by $u^{I,0}$ the vector with entries $u^{I,0}_i=u_i, \forall i\not\in I$ and $u^{I,0}_i=0, \forall i\in I$.
 Let $\lambda^I\in\mathbf{R}^n$ with $\lambda^I_i=0\;,\forall i \not\in I$ and let $\lambda_{c},a\in\mathbf{R}$.  Define
 the connection weights $w$ and $c$ as follows:
\begin{gather*}
 w=a(u^{I,0}-\frac{1}{2}\mathds{1}^{I,0}) + \lambda^I\;,\\
c=- a(u^{I,0}-\frac{1}{2}\mathds{1}^{I,0})^\top u + \lambda_c\;.
\end{gather*}
For this choice and $a\to\infty$ equation~(\ref{equationunoo}) yields: 
\begin{equation}
 p_{w,c}(v)=\begin{cases}
                                \frac{p(v)}{1 + \sum_{v'\in F} \exp{(\lambda^I\cdot v' + \lambda_c )} p (v')}, & \forall v\not\in F\\
                                \frac{(1+ \exp({\lambda^I\cdot v + \lambda_c})) p(v)}{1 + \sum_{v'\in F} \exp{(\lambda^I\cdot v' + \lambda_c )} p (v')}, & \forall v \in F
                               \end{cases} \label{unaeq}\;.
\end{equation}

If the initial $p$ from equation~(\ref{equationuno}) is such that its restriction to $F$ is a product distribution,
then 
$p(v)=K \exp(\eta^I\cdot v)\;,\forall v\in F$, where $K$ is a constant and $\eta^I$ is a vector with
$\eta^I_i=0\;,\forall i\not\in I$.  We can choose $\lambda^I=\beta^I-\eta^I$, and
$\exp(\lambda_c)=\alpha\frac{1}{K\sum_{v\in F} \exp(\beta^I \cdot v)}$. For this choice, equation~(\ref{unaeq}) yields:
\begin{equation*}
 p_{w,c} = (\alpha-1) p + \alpha \hat p\;, 
\end{equation*}
where $\hat p$ is a product distribution with support in $F$ and arbitrary natural parameters $\beta^I$, and $\alpha$ is an arbitrary mixture weight in $[0,1]$. 
Finally, the product distributions on edges of the cube are arbitrary, see~\cite{Montufar2010a} or \cite{Montufar2011} for details, and hence the restriction of any $p$ to any edge is a product distribution. 
\hfill$\square$\end{proof}

\section{Maximal Approximation Errors of RBMs}
\label{sec:maxim-appr-errors}
Let $m < 2^{n-1}-1$. By Theorem~\ref{theorem1} all partition models for partitions of $\{0,1\}^n$ into $m +1$ cubical sets are contained in $\RBM_{n,m}$. 
Applying Corollary~\ref{cor:part-mod-max-KL} to such a partition where the cardinality of all blocks is at most $2^{n - \left\lfloor \log(m+1)\right\rfloor}$ yields the bound $D_{\RBM_{n,m}} \leq  n - \left\lfloor \log(m + 1)\right\rfloor$. 
Similarly, using mixtures of product distributions, Theorem~\ref{theorem1} and Corollary~\ref{cor:ind-mix-max-KL} imply the smaller bound $D_{\RBM_{n,m}} \leq n - 1 - \left\lfloor \log(m + 1)\right\rfloor$. 
In this section we derive an improved bound which strictly decreases, as m increases, until 0 is reached. 

\begin{theorem}\label{theorem2}
Let $m<2^{n-1}-1$. Then the maximal Kullback-Leibler divergence from any distribution on $\{0,1\}^n$ to $\operatorname{RBM}_{n,m}$ is upper bounded by 
\begin{equation*}
\max_{p\in\mathcal{P}} D(p\|\operatorname{RBM}_{n,m})
\leq n - \left\lfloor \log(m+1) \right\rfloor - \frac{m+1}{2^{\left\lfloor \log(m+1) \right\rfloor}}
\leq (n -1) - \log (m+1) + 0.1\;.  
\end{equation*}
Conversely, given an error tolerance $0\leq\epsilon\leq 1$, the choice  $m\geq 2^{(n-1)(1-\epsilon) + 0.1}-1$ ensures a sufficiently rich RBM model that satisfies $D_{\RBM_{n,m}}\le \epsilon D_{\RBM_{n,0}}$. 
\end{theorem}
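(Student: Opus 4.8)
The plan is to bound $D(p\|\RBM_{n,m})$ for every $p$ by comparing $p$ with a well-chosen mixture of $m+1$ product distributions supported on a cubical partition, and then to exploit the freedom to adapt the partition to $p$. Write $k:=\lfloor\log(m+1)\rfloor$, so that $2^{k}\le m+1<2^{k+1}$; the hypothesis $m<2^{n-1}-1$ forces $k\le n-2$. By Theorem~\ref{theorem1} and Lemma~\ref{lem:rI-of-mixture}, for any partition $\xi=\{\Xcal_1,\dots,\Xcal_{m+1}\}$ of $\{0,1\}^n$ into cubical blocks the mixture $\Mcal_\xi$ of the independence models $\Ecal_i$ on the blocks lies in $\RBM_{n,m}$ and satisfies $D(p\|\Mcal_\xi)=\sum_i p(\Xcal_i)\,D(p^{\Xcal_i}\|\Ecal_i)\le\sum_i p(\Xcal_i)(\log|\Xcal_i|-1)$, using blockwise that $D_{\Ecal_i}=\log|\Xcal_i|-1$ (the independence-model bound, cf.\ Corollary~\ref{cor:ind-mix-max-KL}). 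Since $\RBM_{n,m}$ contains $\Mcal_\xi$ for \emph{every} such $\xi$, one has $D(p\|\RBM_{n,m})\le\min_\xi D(p\|\Mcal_\xi)$, and I may tailor $\xi$ to $p$.

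First I would fix a block-size profile. Solving $a+b=m+1$ together with $a\,2^{n-k}+b\,2^{n-k-1}=2^n$ gives $a=2^{k+1}-(m+1)$ blocks of dimension $n-k$ and $b=2(m+1)-2^{k+1}$ blocks of dimension $n-k-1$; both counts are nonnegative because $2^k\le m+1<2^{k+1}$, and a partition $\xi_0$ with this profile is obtained by taking the $2^k$ canonical blocks that fix the first $k$ coordinates and halving $(m+1)-2^k$ of them along coordinate $k+1$. For this profile the blockwise estimate collapses to $D(p\|\Mcal_\xi)\le(n-k-1)\,p(L_\xi)+(n-k-2)\,(1-p(L_\xi))=(n-k-2)+p(L_\xi)$, where $L_\xi$ is the union of the $a$ large blocks.

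The key step is a symmetrization that replaces the worst-case mass $p(L_\xi)$ by the ambient fraction. I would consider the family $\{\xi_0+t:t\in\mathbf{F}_2^{n}\}$ of all translates of $\xi_0$ (addition modulo $2$); each translate is again cubical with the same profile, so each $\Mcal_{\xi_0+t}\subseteq\RBM_{n,m}$. Because translation acts transitively on $\{0,1\}^n$, a fixed point lies in $L_{\xi_0+t}$ for exactly $a\,2^{n-k}$ of the $2^n$ values of $t$, whence $\frac{1}{2^n}\sum_t p(L_{\xi_0+t})=a\,2^{n-k}/2^n=a/2^k=2-\tfrac{m+1}{2^{k}}$. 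Bounding the minimum over $t$ by the average then yields
\[
D(p\|\RBM_{n,m})\le\min_{t} D(p\|\Mcal_{\xi_0+t})\le(n-k-2)+\Big(2-\tfrac{m+1}{2^{k}}\Big)=n-k-\tfrac{m+1}{2^{k}}
\]
for every $p$, which is the first inequality. I expect this averaging to be the main obstacle: a single partition only yields the worst-case value $n-k-1$, and the improvement rests on observing that adaptivity lets one push the unavoidably larger blocks onto regions of small $p$-mass \emph{on average}.

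It remains to verify two elementary estimates. Writing $m+1=2^{k+s}$ with $s\in[0,1)$ turns the second inequality into $2^{s}\ge s+0.9$, which holds on $[0,1]$ because the convex function $s\mapsto 2^{s}-s-0.9$ has positive interior minimum (value $\approx 0.014$); this gives $n-k-\tfrac{m+1}{2^{k}}=n-k-2^{s}\le(n-1)-\log(m+1)+0.1$. For the converse, recall $D_{\RBM_{n,0}}=n-1$ from the independence-model lemma; imposing $(n-1)-\log(m+1)+0.1\le\epsilon(n-1)$ and solving for $m$ gives exactly $m\ge 2^{(n-1)(1-\epsilon)+0.1}-1$, as claimed.
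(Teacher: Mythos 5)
Your proposal is correct and arrives at exactly the paper's bound, but by a genuinely different route for the key combinatorial step. The paper reduces the first inequality to Lemma~\ref{lemmamaxerrormix}, which bounds the divergence to the \emph{union} of all mixtures with a prescribed cubical block-size profile by induction on $n$: one chooses a cubical half $\Ycal$ with $p(\Ycal)\le\tfrac12$, applies the inductive hypothesis to each half, and then compensates the imbalance between $p(\Ycal)$ and $\tfrac12$ term by term. You replace that induction with a one-step symmetrization: fix one explicit partition realizing the profile ($a=2^{k+1}-(m+1)$ blocks of dimension $n-k$ and $b=2(m+1)-2^{k+1}$ of dimension $n-k-1$, with $k=\lfloor\log(m+1)\rfloor\le n-2$ so every block has dimension at least $1$ and the blockwise bound $D_{\Ecal_i}=\log\lvert\Xcal_i\rvert-1$ applies), note that every translate by $t\in\mathbf{F}_2^n$ is again an admissible cubical partition, and bound the minimum over translates by the average, for which $\frac{1}{2^n}\sum_t p(L_{\xi_0+t})=a/2^{k}$ exactly. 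The two computations agree because the paper's expression $\sum_i\frac{n_i-1}{2^{n-n_i}}$ is precisely the translation average of $\sum_i p(\Xcal_i)(\log\lvert\Xcal_i\rvert-1)$; in effect your argument reproves the relevant case of Lemma~\ref{lemmamaxerrormix} without induction, at the (small) cost of exhibiting one concrete partition rather than working with an abstract size profile. What the paper's induction buys is a statement for arbitrary profiles $n_1,\dots,n_m$; what your averaging buys is brevity and a transparent explanation of why adaptivity of the partition to $p$ improves the naive single-partition bound $n-k-1$. Your treatment of the second inequality (reduction to $2^{s}\ge s+0.9$ on $[0,1]$ via convexity) is equivalent to the paper's Lemma~\ref{lemma:supplementary}, and the converse statement follows from $D_{\RBM_{n,0}}=D_{\Ecal_n}=n-1$ exactly as you say.
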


For $m\geq 2^{n-1}-1$ the error vanishes, corresponding to the fact that an RBM with that many hidden units is a universal approximator. 
In Figure~\ref{fig3} we use computer experiments to illustrate Theorem~\ref{theorem2}. 
The proof makes use of the following lemma: 

\begin{figure}
\begin{center}
\includegraphics[clip=true,trim=4cm 21cm 3cm 2.8cm]{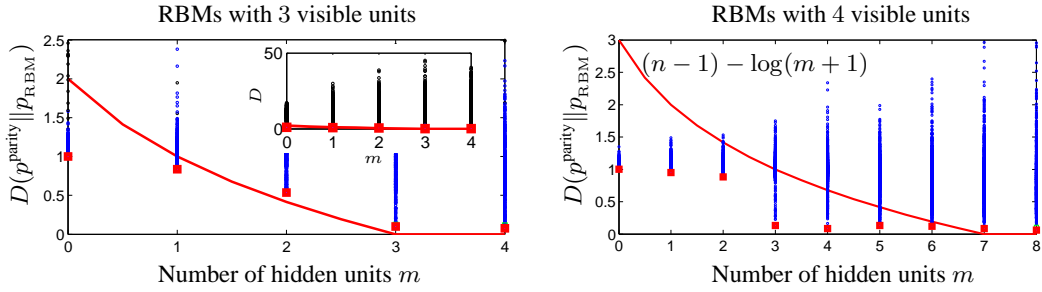}
\end{center}
\vspace{-.2cm}
\caption{%
This figure demonstrates our results for $n=3$ and $n=4$ visible units. The red curves are $(n-1)-\log(m+1)$. 
We fixed $p^{\text{parity}}$ as target distribution, the uniform distribution on binary length $n$ vectors with an even number of ones. 
The distribution $p^{\text{parity}}$ is not the KL-maximizer from $\RBM_{n,m}$, but it is in general difficult to
represent. Qualitatively, samples from $p^{\text{parity}}$ look like uniformly distributed, and representing
$p^{\text{parity}}$ requires the maximal number of product mixture components~\cite{Montufar2010,Montufar2010a}. 
For both values of $n$ and each $m=0,\ldots,2^n/2$ we initialized $500$ resp.~$1000$ RBMs at parameter values chosen uniformly at random in the range $[-10, 10]$. The inset of the left figure shows the resulting KL-divergence $D(p^{\text{parity}}\|p_{\RBM}^{\text{rand}})$ (for $n=4$ the resulting KL-divergence was larger). Randomly chosen distributions in $\RBM_{n,m}$ are likely to be very far from the target distribution. 
We trained these randomly initialized RBMs using CD for $500$ training epochs, learning rate $1$ and a list of even parity vectors as training data. 
The result after training is given by the blue circles. After training the RBMs the result is often not better than the uniform distribution, for which $D(p^{\text{parity}}\|\tfrac{\mathds{1}}{|\{0,1\}^n|})=1$. 
For each $m$, the best set of parameters after training was used to initialize a further CD training with a smaller learning rate (green squares, mostly covered) followed by a short maximum likelihood gradient ascent (red filled squares). 
} \label{fig3}
\end{figure}

\begin{lemma}
  \label{lemmamaxerrormix}
  Let $n_{1},\dots,n_{m}\ge 0$ such that $2^{n_{1}} +\dots+2^{n_{m}}=2^{n}$.  Let $\Mcal$ be the union of all mixtures
  of independent models corresponding to all cubical partitions of $\Xcal$ into blocks of cardinalities $2^{n_{1}},\dots,2^{n_{m}}$. 
  Then $D_{\Mcal} \le \sum_{i: n_{i}>1} \frac{n_{i}-1}{2^{n-n_{i}}}$. 
\end{lemma}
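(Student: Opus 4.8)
The plan is to exploit the fact that $\Mcal$ is a \emph{finite union} of mixture models, one for each cubical partition with the prescribed block sizes, so that for a fixed target $p$ we are free to pick the partition that approximates $p$ best. Writing $\Mcal_{\xi}$ for the mixture of independence models attached to a cubical partition $\xi=\{\Xcal_1,\dots,\Xcal_m\}$ with block cardinalities $2^{n_1},\dots,2^{n_m}$, and using that the closure of a finite union is the union of the closures, the divergence to the union is the minimum over partitions, $D(p\|\Mcal)=\min_{\xi}D(p\|\Mcal_{\xi})$. Applying Lemma~\ref{lem:rI-of-mixture} to $\xi$ and the independence models $\Ecal_i\subseteq\Pcal(\Xcal_i)$, together with the base bound $D(p^{\Xcal_i}\|\Ecal_i)\le D_{\Ecal_i}=n_i-1$ from Corollary~4.1 of~\cite{AyKnauf06:Maximizing_Multiinformation} (which is $0$ whenever $n_i\le 1$), I obtain the pointwise estimate
\begin{equation*}
  D(p\|\Mcal_{\xi}) \;=\; \sum_{i=1}^{m} p(\Xcal_i)\,D(p^{\Xcal_i}\|\Ecal_i) \;\le\; \sum_{i:\,n_i>1} (n_i-1)\,p(\Xcal_i)\;.
\end{equation*}

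It remains to exhibit a partition making the right-hand side small, and rather than optimizing directly I would average over a well-chosen family of partitions and use that the minimum is at most the mean. Fix one reference partition $\xi^0=\{\Xcal_1^0,\dots,\Xcal_m^0\}$ into faces of dimensions $n_1,\dots,n_m$; such a partition exists because $\sum_i 2^{-(n-n_i)}=1$, so the equality case of Kraft's inequality yields a complete prefix tree whose leaves at depth $n-n_i$ correspond to faces of dimension $n_i$. Each such face is a coset $\Xcal_i^0=g_i+V_i$ of a coordinate subspace $V_i$ with $|V_i|=2^{n_i}$, and translating by any $t\in\{0,1\}^n$ (acting by coordinatewise XOR) sends $\xi^0$ to another cubical partition $t+\xi^0=\{t+\Xcal_i^0\}$ with identical block sizes. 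Thus every translate is an admissible competitor in the minimum defining $D(p\|\Mcal)$.

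The key computation is the average over a uniform translation $t$. Since $t+\Xcal_i^0$ ranges over the $2^{n-n_i}$ cosets of $V_i$, each attained for exactly $2^{n_i}$ values of $t$, and since those cosets partition the whole cube, one finds
\begin{equation*}
  \frac{1}{2^{n}}\sum_{t\in\{0,1\}^{n}} p(t+\Xcal_i^0) \;=\; \frac{2^{n_i}}{2^{n}}\sum_{\text{cosets }C\text{ of }V_i} p(C) \;=\; \frac{1}{2^{\,n-n_i}}\;.
\end{equation*}
Summing the pointwise estimate over the translates and dividing by $2^n$, the bound $D(p\|\Mcal)\le\min_{t}D(p\|\Mcal_{t+\xi^0})\le \tfrac{1}{2^{n}}\sum_{t}D(p\|\Mcal_{t+\xi^0})$ then gives $D(p\|\Mcal)\le\sum_{i:\,n_i>1}(n_i-1)/2^{n-n_i}$ for every $p$, and taking the maximum over $p$ yields the claimed bound on $D_{\Mcal}$.

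I expect the only genuinely delicate point to be this passage from a single partition to the translation-averaged bound: one must check that translates of a face partition are again face partitions of the same type (immediate from the coset description) and, crucially, that averaging $p(t+\Xcal_i^0)$ over all translations collapses to $2^{-(n-n_i)}$ \emph{independently of $p$} — this is exactly what makes the final bound match the statement with equality in the averaging step. The existence of the reference partition $\xi^0$ is a minor combinatorial lemma (Kraft equality), and the remaining manipulations are bookkeeping through Lemma~\ref{lem:rI-of-mixture} and the independence-model divergence.
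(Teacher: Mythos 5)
Your argument is correct, but it proves the lemma by a genuinely different route than the paper. The paper proceeds by induction on $n$: it cuts the cube into two half-cubes $\Ycal$ and $\Xcal\setminus\Ycal$ with $p(\Ycal)\le\tfrac12$, regroups the prescribed block sizes so that the first $k$ of them exactly fill a half-cube, applies the inductive hypothesis to each half (which produces the bound with weights $p(\Ycal)$ and $p(\Xcal\setminus\Ycal)$ instead of $\tfrac12$), and then performs a rebalancing step --- matching each large block to a group of small blocks of equal total cardinality --- to replace those weights by $\tfrac12$ and recover the factor $2^{-(n-n_i)}$. You instead fix one reference face partition $\xi^0$ (whose existence via Kraft equality you correctly note, and which the paper also needs implicitly), act on it by the translation group of $\{0,1\}^n$, and average the pointwise bound $D(p\|\Mcal_\xi)\le\sum_{i:n_i>1}(n_i-1)p(\Xcal_i)$ over all $2^n$ translates; the coset-counting identity $2^{-n}\sum_t p(t+\Xcal_i^0)=2^{-(n-n_i)}$ then yields the stated bound in one stroke. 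All the individual steps check out: the reduction of $D(p\|\Mcal)$ to a minimum over partitions is valid for a finite union, Lemma~\ref{lem:rI-of-mixture} gives the exact decomposition over blocks, the per-block bound $n_i-1$ is the cited Corollary~4.1 (and vanishes for $n_i\le1$), translates of a face partition are face partitions of the same type, and the min-is-at-most-the-mean step is sound. Your averaging argument is arguably more transparent and avoids the paper's somewhat delicate regrouping inequality; what it uses that the paper does not is the symmetry of the family of competitor partitions under the full translation group, whereas the induction only ever compares two half-cubes at a time and so needs the monotone ordering of the $n_i$ and the explicit matching $j_1<\dots<j_{k+1}$.
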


\begin{proof}[Proof of Lemma~\ref{lemmamaxerrormix}]
  The proof is by induction on $n$. 
  If $n=1$, then $m=1$ or $m=2$, and in both cases it is easy to see that the inequality holds (both sides vanish). 
  If $n>1$, then order the $n_{i}$ such that $n_{1}\ge n_{2}\ge\dots\ge n_{m}\ge 0$. Without loss of generality assume $m>1$.

  Let $p\in\Pcal(\Xcal)$, and let $\Ycal$ be a cubical subset of $\Xcal$ of cardinality $2^{n-1}$ such that $p(\Ycal)\le\frac12$. 
  Since the numbers $2^{n_{1}} +\dots+2^{n_{i}}$ for $i=1,\dots,m$ contain all multiples of $2^{n_{1}}$ up to $2^{n}$ and $2^n/2^{n_1}$ is even, there exists $k$ such that $2^{n_{1}} +\dots+2^{n_{k}}=2^{n-1}=2^{n_{k+1}}+\dots+2^{n_{m}}$. 

Let $\Mcal'$ be the union of all mixtures of independence models corresponding to all cubical partitions $\xi=\{\Xcal_1,\ldots, \Xcal_m\}$ of $\Xcal$ into $m$ blocks of cardinalities $n_{1},\dots,n_{m}$ such that $\Xcal_{1}\cup\dots\cup\Xcal_{k}=\Ycal$. 
In the following, the symbol $\sum'_{i}$ shall denote summation over all indices $i$ such that $n_{i}>1$. 
By induction
  \begin{equation}
    D(p\|\Mcal)\le D(p\|\Mcal')
    \le p(\Ycal)\sideset{}{^{\prime}}\sum_{i=1}^{k} \frac{n_{i}-1}{2^{n-1-n_{i}}} 
      + p(\Xcal\setminus\Ycal)\sideset{}{^{\prime}}\sum_{j=k+1}^{m} \frac{n_{j}-1}{2^{n-1-n_{j}}}\;. \label{arrr}
  \end{equation}
There exist $j_{1}=k+1<j_{2}<\dots<j_{k} < j_{k+1} = m + 1$ such that $2^{n_{i}}=2^{n_{j_{i}}}+\dots+2^{n_{j_{i+1}-1}}$ for all $i\leq k$. 
Note that 
\begin{equation*}
    \sideset{}{^{\prime}}\sum_{j=j_{i}}^{j_{i+1}}\frac{n_{j}-1}{2^{n-1-n_{j}}}
    \le \frac{n_{i}-1}{2^{n-1}}(2^{n_{j_{i}}}+\dots+2^{n_{j_{i+1}-1}}) = \frac{n_{i}-1}{2^{n-1-n_{i}}}\;,
\end{equation*}
and therefore
\begin{equation*}
    (\tfrac{1}{2} - p(\Ycal))\frac{n_{i}-1}{2^{n-1-n_{i}}}\,
    +\, (\tfrac{1}{2} - p(\Xcal\setminus\Ycal) ) \sideset{}{^{\prime}}\sum_{j=j_{i}}^{j_{i+1}-1} \frac{n_{j}-1}{2^{n-1-n_{j}}}
    \ge 0\;.
\end{equation*}
Adding these terms for $i=1,\ldots,k$ to the right hand side of equation~(\ref{arrr}) yields
  \begin{align*}
    D(p\|\Mcal)&
    \le \frac12\sideset{}{^{\prime}}\sum_{i=1}^{k} \frac{n_{i}-1}{2^{n-1-n_{i}}} 
      + \frac12\sideset{}{^{\prime}}\sum_{j=k+1}^{m} \frac{n_{j}-1}{2^{n-1-n_{j}}}, 
  \end{align*}
  from which the assertions follow. 
\hfill$\square$\end{proof}

\begin{proof}[Proof of Theorem~\ref{theorem2}]
From Theorem~\ref{theorem1} we know that $\RBM_{n,m}$ contains the union $\Mcal$ of all mixtures of independent models corresponding to all partitions with up to $m+1$ cubical blocks. 
Hence, $D_{\RBM_{n,m}}\le D_{\Mcal}$. 
Let $k = n - \lfloor\log(m+1)\rfloor$ and $l = 2m+2 - 2^{n-k+1}\ge 0$; then $l 2^{k-1} + (m+1-l) 2^{k} = 2^{n}$. 
Lemma~\ref{lemmamaxerrormix} with $n_{1}=\dots=n_{l}=k-1$ and $n_{l+1}=\dots=n_{m+1}=k$ implies
\begin{equation*}
    D_{\Mcal}\le \frac{l(k-2)}{2^{n-k+1}} + \frac{(m+1-l)(k-1)}{2^{n-k}} = k  - \frac{m+1}{2^{n-k}}. 
\end{equation*}
This proves the first inequality. 
For the second inequality see Lemma~\ref{lemma:supplementary} in the Appendix.\footnote{A previous version of this paper erroneously bounded $D_\Mcal$ from above by $(n-1)-\log(m+1)$, which violates the correct bound by a small value (always smaller than $0.1$).}
\hfill$\square$\end{proof}

\section{Conclusion}
\label{sec:conclusion}
We studied the expressive power of the Restricted Boltzmann Machine model with $n$ visible and $m$ hidden units. We presented a hierarchy of explicit classes of probability distributions that an RBM can represent. These classes include large collections of mixtures of $m + 1$ product distributions. In particular any mixture of an arbitrary product distribution and $m$ further product distributions with disjoint supports. The geometry of these submodels is easier to study than that of the RBM models, while these subsets still capture many of the distributions contained in the RBM models. Using these results we derived bounds for the approximation errors of RBMs. We showed that it is always
possible to reduce the error to at most $n - \left\lfloor \log(m+1) \right\rfloor - \frac{m+1}{2^{\left\lfloor \log(m+1) \right\rfloor}}\approx (n-1) - \log(m + 1)$. That is, given any target distribution, there is a distribution within the RBM model for which the Kullback-Leibler divergence between both is not larger than that number. Our results give a theoretical basis for selecting the size of an RBM which accounts for a desired error tolerance. 

Computer experiments showed that the bound captures the order of magnitude of the true approximation error, at least for small examples.  However, learning may not always find the best approximation, resulting in an error that may well exceed our bound. 


\newpage

\bibliography{referenzen}{}\bibliographystyle{abbrv}

\newpage
\appendix
\section{Appendix}

\begin{lemma}\label{lemma:supplementary}
  For all $x>0$,
  \begin{equation*}
    (n-1) -\log(x) \le
    n -\left\lfloor \log(x) \right\rfloor - \frac{x}{2^{\left\lfloor\log (x)\right\rfloor}}
    \leq (n-1) -\log(x) +c\;,
  \end{equation*}
  where $c=-\log(\ln(2)) - (\frac{1}{\ln(2)} -1)\approx 0.086$.
\end{lemma}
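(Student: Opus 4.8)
The plan is to eliminate the common parameter $n$ and reduce the two-sided inequality to a single one-variable estimate. Since $n$ appears identically in all three expressions, it cancels, and writing $k=\lfloor\log(x)\rfloor$ together with the fractional part $t=\log(x)-k\in[0,1)$ gives $x/2^{k}=2^{t}$, so the middle expression equals $n-k-2^{t}$ while the left and right expressions equal $(n-1)-k-t$ and $(n-1)-k-t+c$. Subtracting $(n-1)-k-t$ throughout, the entire claim collapses to showing
\begin{equation*}
0\le g(t):=1+t-2^{t}\le c\qquad\text{for all }t\in[0,1).
\end{equation*}

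For the lower bound I would use convexity: $2^{t}$ is a convex function of $t$ and coincides with the affine map $t\mapsto 1+t$ at the endpoints $t=0$ and $t=1$, so on $[0,1]$ it lies below the connecting chord, i.e. $2^{t}\le 1+t$, which is exactly $g(t)\ge 0$.

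For the upper bound I would maximize $g$. Because $g''(t)=-(\ln 2)^{2}2^{t}<0$, the function $g$ is strictly concave, so its unique critical point is a global maximum. Solving $g'(t)=1-(\ln 2)2^{t}=0$ gives $2^{t^{*}}=1/\ln 2$, i.e. $t^{*}=-\log(\ln 2)$, and substituting back yields $g(t^{*})=1-\log(\ln 2)-\tfrac{1}{\ln 2}=c$. Hence $g(t)\le c$ for every real $t$, and in particular on $[0,1)$, which establishes both inequalities.

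The computation is entirely elementary, so there is no serious obstacle; the only points requiring care are the bookkeeping in the reduction step (checking that $x/2^{k}=2^{t}$ and that $n$ really does cancel) and confirming that the closed form of the maximum $g(t^{*})$ matches the stated constant $c$. One may additionally note $t^{*}\in(0,1)$, since $\ln 2\in(\tfrac12,1)$, which shows that the bound $c$ is attained within the admissible range of $t$ and is therefore the best possible constant.
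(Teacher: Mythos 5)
Your proof is correct and is essentially the paper's argument in a cleaner parametrization: the substitution $t=\log(x)-\lfloor\log(x)\rfloor$ turns the paper's function $f(x)=\log(x)+1-\lfloor\log(x)\rfloor-x/2^{\lfloor\log(x)\rfloor}$ into your $g(t)=1+t-2^{t}$, and both proofs then use the same chord/convexity observation for the lower bound and the same critical-point computation ($2^{t^*}=1/\ln 2$) for the upper bound, arriving at the same constant $c$.
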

\begin{proof}
  Consider the function $f(x) = \log(x) + 1 - \lfloor\log(x)\rfloor - \frac{x}{2^{\lfloor\log(x)\rfloor}}$.  Then $f$ is
  the difference between the concave function $\log(x)$ and a piece-wise linear function interpolating~$\log(x)$.  Hence
  $f$ is non-negative, proving the first inequality.  Moreover, $f(x)=0$ if and only if $x$ is a power of 2.  Between
  each pair of consecutive powers of 2 the function $f$ has a local maximum.  If $x$ is not a power of 2, then $f$ is
  differentiable at $x$ with derivative
  \begin{equation*}
    f'(x) = \frac{1}{2^{\lfloor\log(x)\rfloor}} - \frac{1}{x\ln(2)}.
  \end{equation*}
  This derivative vanishes if and only if $x=2^{\lfloor\log(x)\rfloor}/\ln(2)$.  At such a point,
  \begin{align*}
    f(x)    
    & = \log(2^{\lfloor\log(x)\rfloor}/\ln(2)) + 1 - \lfloor\log(x)\rfloor - \frac{2^{\lfloor\log(x)\rfloor}}{2^{\lfloor\log(x)\rfloor}\ln(2)}
    \\
    & = - \log(\ln(2)) - \frac{1}{\ln(2)} + 1 = c.
  \end{align*}
  Hence, $f(x)\le c$ for all~$x$, proving the second inequality.
\hfill$\square$\end{proof}

Figure~\ref{fig3a} is a supplement to Figure~\ref{fig3}. 

\smallskip

\begin{figure}[h!]
\begin{center}
\setlength{\unitlength}{1cm}
\resizebox{1.03\linewidth}{!}{
\hspace{-.5cm}
\begin{tabular}{ccc}
$D( p^{\text{parity}}\|\RBM_{3,m})$ & $D( p^{\text{parity}}\|\RBM_{4,m})$ & $D( p^{\text{parity}}\|\RBM_{5,m})$\\
\includegraphics[clip=true,trim=4.9cm 9.22cm 4.2cm 9cm,width=5.3cm]{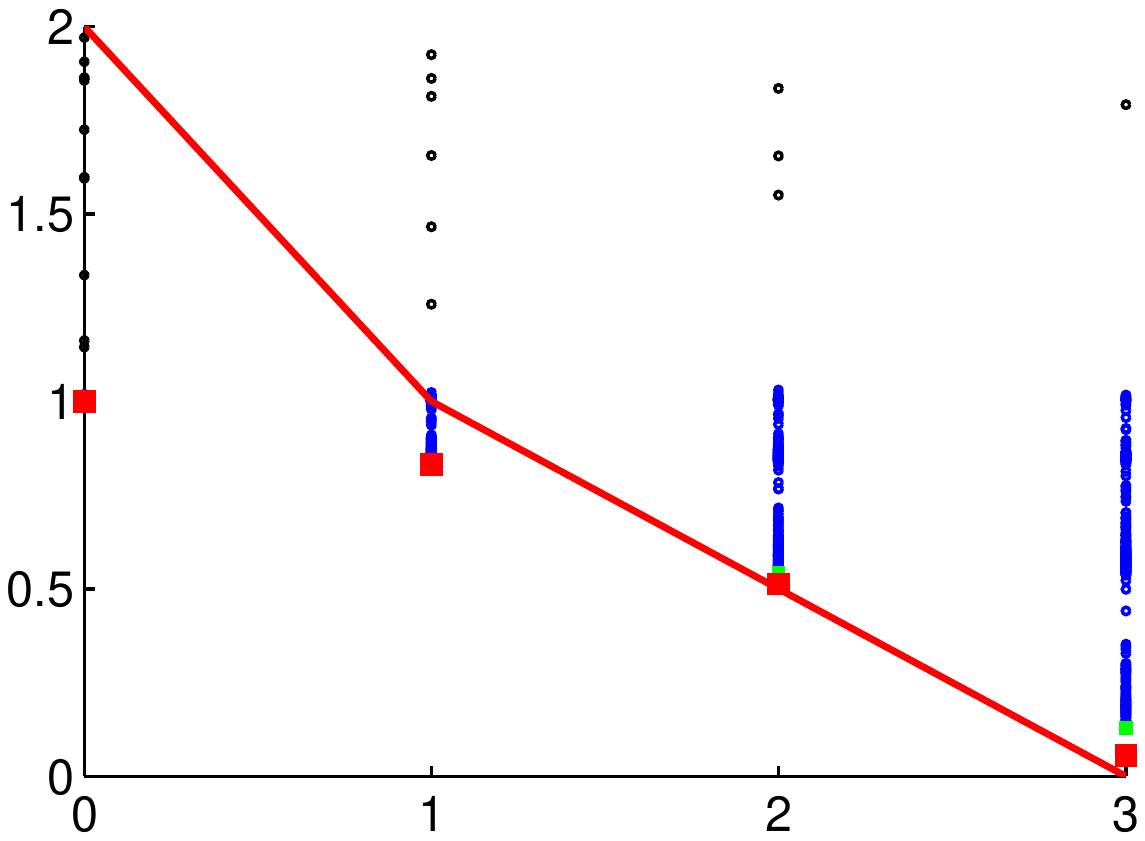}&
\includegraphics[clip=true,trim=4.9cm 9.4cm 4.2cm 9cm,width=5.3cm]{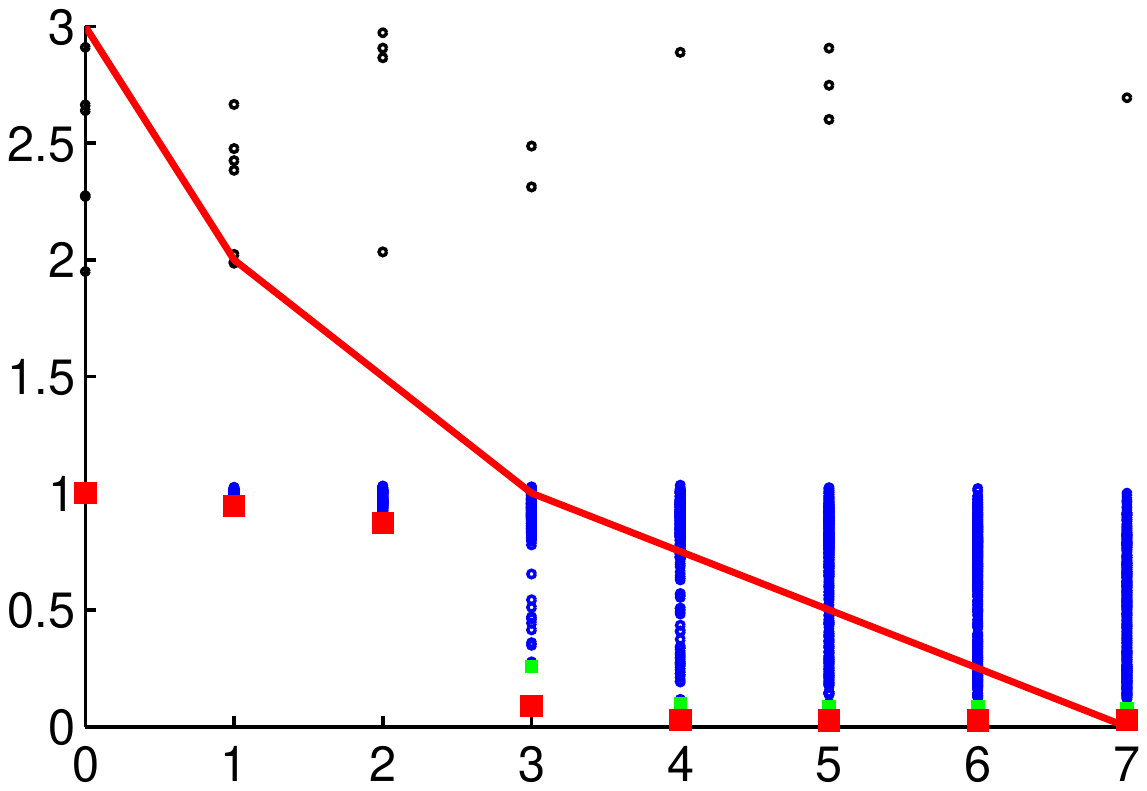}&
\includegraphics[clip=true,trim=4.5cm 8.7cm 4.2cm 9cm,width=5.2cm]{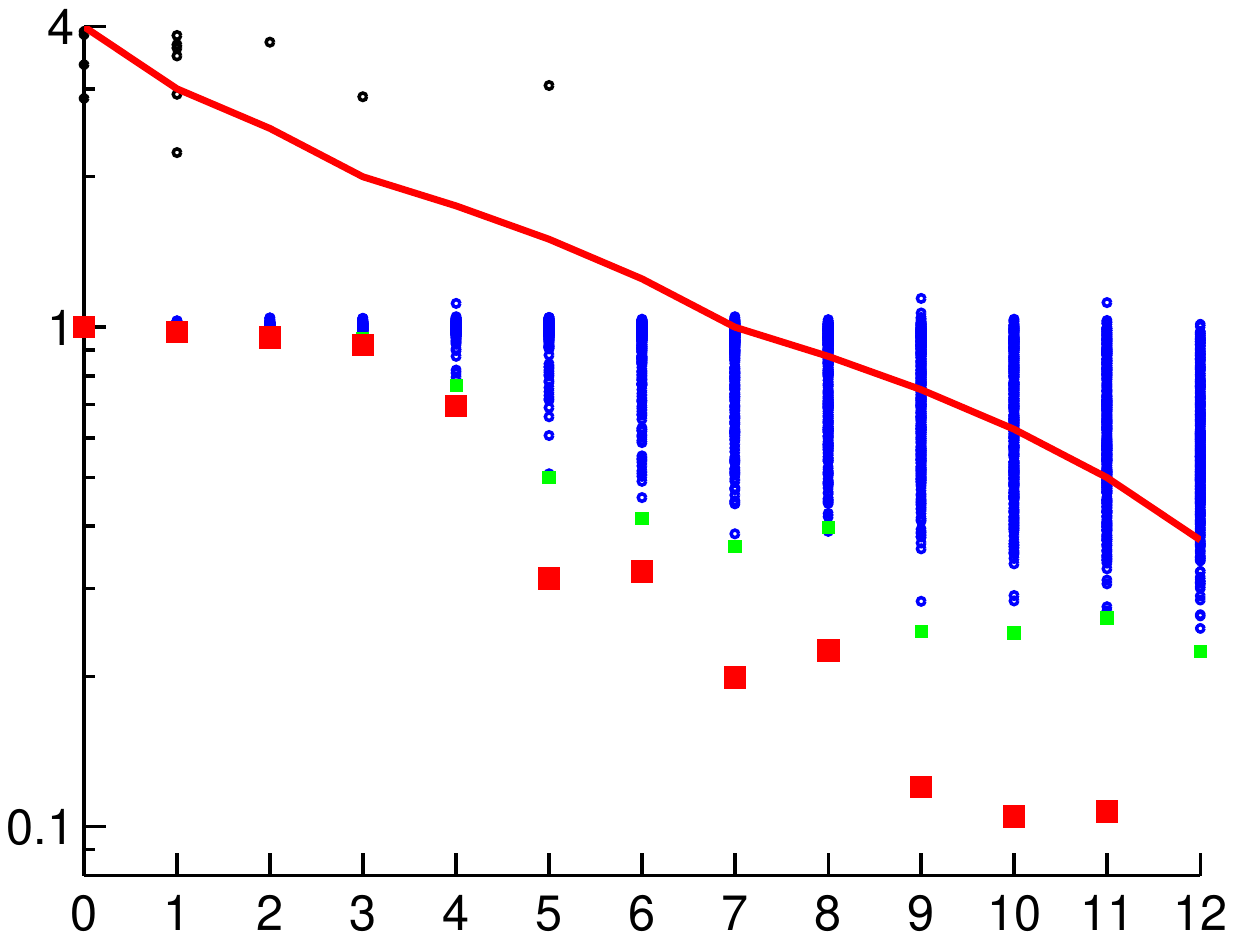}\vspace{-.2cm}\\
$m$&$m$&$m$
\end{tabular}
}
\end{center}
\caption{
This figure demonstrates Theorem~\ref{theorem2} for RBMs with $n=3,4,5$ visible units. 
The red squares show the KL-divergence from the target to the (numerical) best approximation within the RBM model. 
The red curves show the bounds $n - \left\lfloor \log(m+1) \right\rfloor - \frac{m+1}{2^{\left\lfloor \log(m+1) \right\rfloor}}$ from the theorem. 
} \label{fig3a}
\end{figure}

\end{document}